\newcommand{\beq}[1][\vspace{0.3em}]{#1\begin{equation}}
\newcommand{\eeq}{\end{equation}}
\newcommand{\bit}{\vspace{0mm}\begin{itemize}}
\newcommand{\eit}{\vspace{0mm}\end{itemize}}
\newcommand{\ben}{\vspace{0mm}\begin{enumerate}}
\newcommand{\een}{\vspace{0mm}\end{enumerate}}
\newtheorem{prop}{Proposition}
\newtheorem{definition}{Definition}
\newcommand{\LSE}[1]{\underset{#1}{\text{LSE }}}
\newcommand{\Plus}{\mathord{\begin{tikzpicture}[baseline=0ex, line width=1, scale=0.1]
\draw (1,0) -- (1,2);
\draw (0,1) -- (2,1);
\end{tikzpicture}}}
\title{Equivariant Neural Network for Factor Graphs}
\author{Fan-Yun Sun\textsuperscript{1},\ \ Jonathan Kuck\textsuperscript{2},\ \ Hao Tang\textsuperscript{3},\ \ \normalfont{and}\ \ \bf{Stefano Ermon}\textsuperscript{1}\\

\textsuperscript{1}Stanford University\ \
\textsuperscript{2}Dexterity\ \
\textsuperscript{3}Cornell University\ \
\\
{\tt\small fanyun@stanford.edu, jonathan@dexterity.ai} \\
{\tt\small haotang@cs.cornell.edu, ermon@stanford.edu}
}
\begin{document}

\maketitle
\begin{abstract}
Several indices used in a factor graph data structure can be permuted without changing the underlying probability distribution. An algorithm that performs inference on a factor graph should ideally be equivariant or invariant to permutations of global indices of nodes, variable orderings within a factor, and variable assignment orderings. However, existing neural network-based inference procedures fail to take advantage of this inductive bias. In this paper, we precisely characterize these isomorphic properties of factor graphs and propose two inference models: Factor-Equivariant Neural Belief Propagation (FE-NBP) and Factor-Equivariant Graph Neural Networks (FE-GNN). FE-NBP is a neural network that generalizes BP and respects each of the above properties of factor graphs while FE-GNN is an expressive GNN model that relaxes an isomorphic property in favor of greater expressivity. Empirically, we demonstrate on both real-world and synthetic datasets, for both marginal inference and MAP inference, that FE-NBP and FE-GNN together cover a range of sample complexity regimes: FE-NBP achieves state-of-the-art performance on small datasets while FE-GNN achieves state-of-the-art performance on large datasets.



\end{abstract}

\section{Introduction}
Probabilistic graphical models (PGM) provide a statistical framework for modeling dependencies between random variables. Performing inference on PGMs is a fundamental task with many real-world applications including statistics, physics, and machine learning~\cite{chandler1987introduction,mezard2002analytic,wainwright2008graphical,baxter2016exactly,mcbook}.
Factor graphs are a general way of representing PGMs. 
As a data structure, a factor graph can be viewed as a bipartite graph of variables and factors; each factor node 
indicates the presence of dependencies among the variables it is connected to. Many inference algorithms have been developed to leverage the conditional independence structure imposed by a factor graph representation. Among these, Belief Propagation (BP)~\cite{koller2009probabilistic} has demonstrated empirical success in a variety of applications such as error correction decoding algorithms~\cite{mackay1999good} and combinatorial optimization~\cite{braunstein2004survey}. 

Despite their empirical success, traditional inference algorithms like BP are handcrafted and perform the same computational procedures for any input PGM, limiting their accuracy. 
Thus, researchers have started to develop \textit{trainable} neural network-based inference models~\cite{kuck2020belief,satorras2021neural,Yoon2018InferenceIP,zhang2019factor}, aiming to enable inference algorithms to adapt by learning from data. 
However, prior works have treated factor graphs simply as bipartite graphs,
overlooking the more complex isomorphisms associated with factor graphs. 
Recently, \cite{kuck2020belief} presents a description of factor graph isomorphism yet it is incomplete and lacks empirical evidence. In this work, we present a complete description of factor graph isomorphism, with empirical evidence of improved inductive bias.

We propose two neural network-based inference models that leverage these factor graph isomorphism properties to improve their inductive bias: Factor-Equivariant Neural Belief Propagation and (FE-NBP) and Factor-Equivariant Graph Neural Networks (FE-GNN). FE-NBP is a neural network that adopts the message passing procedure of standard BP but incorporates a neural network module that learns adaptive damping ratios while updating the messages. FE-NBP fully respects factor graph isomorphism. 
FE-GNN is an end-to-end inference model parameterized by a graph neural network (GNN) tailored for factor graphs. Compared with other existing GNN-based inference models, FE-GNN has a better inductive bias; compared with BP or FE-NBP, its discriminative power can be leveraged when a larger amount of training instances is available.

Using one of the most common experimental settings for evaluating probabilistic inference algorithms -- marginal inference on Ising models, we show that FE-NBP achieves state-of-the-art performance on small datasets while FE-GNN achieves state-of-the-art performance on large datasets. We further conduct experiments on factor graphs where at least one factor potential is not a symmetric tensor and show dramatic improvement of FE-GNN over other existing GNN-based inference models. This supports our claim that respecting factor graph isomorphism improves the inductive bias of neural architectures that perform inference on factor graphs. We also conduct experiments on real-world UAI-challenge datasets and demonstrate that FE-NBP outperforms existing inference models on MAP inference. 

We summarize our contributions as follows:
\begin{itemize}
 \item We identify a previously overlooked isomorphism between factor graphs and empirically demonstrate the effectiveness of incorporating it as an inductive bias.
 
 \item We propose Factor-Equivariant Neural Belief Propagation (FE-NBP), a neural architecture that performs inference on factor graphs. FE-NBP generalizes belief propagation by learning adaptive damping ratios while fully respecting factor graph isomorphism. Experiments conducted on both Ising models and UAI-challenge datasets manifest the effectiveness of FE-NBP.
 
  \item We propose Factor-Equivariant Graph Neural Networks (FE-GNN), an end-to-end GNN-based inference model that respects more conditions of factor graph isomorphisms than existing GNNs. In our experiments, we demonstrate that FE-GNN is superior to other existing GNN-based inference models.

  \item Empirically, we show that FE-NBP and FE-GNN together cover a wide range of sample complexity regimes and discuss the trade-offs between different classes of models.
\end{itemize}


\begin{figure}
    \centering
    \includegraphics[width=.95\textwidth]{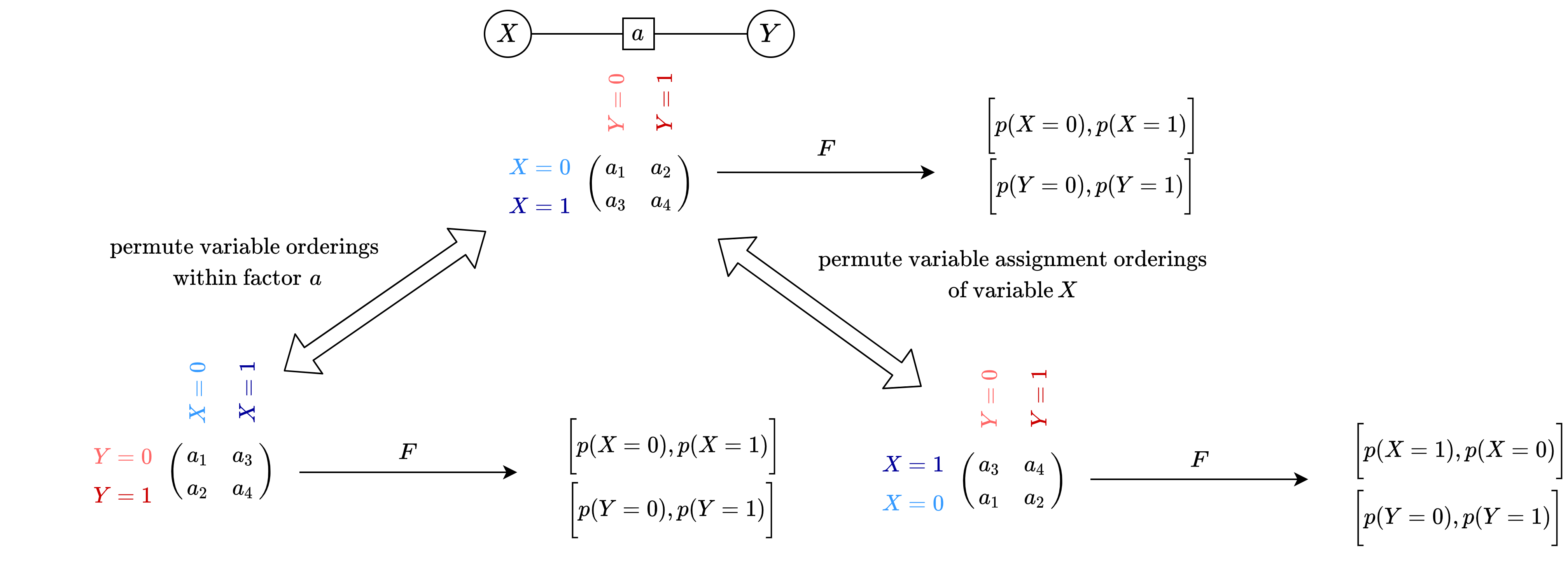}
    \caption{Consider a factor graph with one factor and two variables $X$ and $Y$. Both variables are binary and thus the factor potential $a$ is a 2 by 2 tensor. Note all three factor graphs in the figure are isomorphic. That is, all three represent the same probability distribution and are related by bijective mappings. $F$ is a function or algorithm that takes a factor graph as input and outputs an ordered list of vectors where each vector represents marginal estimates for a variable. 
    If we permute variable orderings ($X,Y$ to $Y,X$) within factor $a$ and $F$ gives invariant outputs, $F$ respects Local Variable Symmetry of factor graph isomorphism. If we permute variable assignments within a variable ($X=0,X=1$ to $X=1,X=0$) and $F$ gives equivariant outputs, $F$ respects Variable Assignment Symmetry of factor graph isomorphism (the dimension of marginal vectors permute accordingly)}.
    \label{fig:factor_graph_isomorphism}
    \vspace{-5mm}
\end{figure}
\section{Background}


In this section we provide background on factor graphs~\cite{kschischang2001factor,yedidia2005constructing}, belief propagation~\cite{koller2009probabilistic}, and graph neural networks~\cite{gori2005new,scarselli2008graph,kipf2016semi}.

\paragraph{Factor Graph}
A factor graph is a compact representation of a discrete probability distribution that takes advantage of (conditional) independencies among variables. 
Let $p$ be a distribution defined over $n$ discrete random variables $\{X_1, X_2, \dots, X_n\}$. Let $x_i$ denote a possible assignment of the $i^{th}$ variable.  We use the shorthand $p(\mathbf{x}) = p(X_1 = x_1, \dots, X_n = x_1)$ for the joint probability mass function, where $\mathbf{x} = \{x_1, x_2, \dots, x_n\}$ is a realization of all $n$ variables. Without loss of generality, $p(\mathbf{x})$ can be written as following:
\footnotesize
\begin{equation} \label{eq:distribution}
    p(\mathbf{x}) = \frac{1}{Z}  \prod_{a=1}^M f_a(\mathbf{x}_a) ,  \quad \ \ \ \ \ \ \   Z = \sum_{\mathbf{x}} \Bigg ( \prod_{a=1}^M f_a(\mathbf{x}_a) \Bigg ).
\end{equation}
\normalsize
The factor graph is defined in terms of a set of factors $\{f_1, f_2, \dots, f_M\}$, where each factor $f_a$ takes a subset of variable's assignments $\mathbf{x}_a \subset \{x_1, x_2, \dots, x_N\}$ as input and $f_a(\mathbf{x}_a)>0$. $Z$ is the factor graph's normalization constant (or partition function). As a data structure, a factor graph is a bipartite graph with $N$ variables nodes and $M$ factor nodes. Factor nodes and variables nodes are connected if and only if the variable is in the scope of the factor. For readability, we will use $a$ and $b$ to index factors, $i$ and $j$ to index variables, upper-case $X_i$ to indicate variable $i$, and lower-case $x_l$ to indicate a variable assignment.


\paragraph{Belief Propagation}
Belief propagation (BP) is a method for estimating marginal distribution of variables and the partition function (marginal inference). BP performs iterative message passing among neighboring variable and factor nodes. 
For numerical stability, belief propagation is generally performed in log-space, and messages are normalized at every iteration.
Variable to factor messages (variable $X_i$ to factor $f_a$), $\bm{m_{i \rightarrow a}^{(k)}}$, and factor to variable messages (factor $f_a$ to variable $X_i$), $\bm{m_{a \rightarrow i}^{(k)}}$, are computed at every iteration $k$ as
\begin{align} 
      m_{i \rightarrow a}^{(k)}(x_l) &= -z_{i \rightarrow a} + \sum_{c \in \mathcal{N}(i) \setminus a} m_{c \rightarrow i}^{(k-1)}(x_l), \label{eq:varToFacMsgs_log} \\
      \tilde{m}_{a \rightarrow i}^{(k)}(x_l) &= -z_{a \rightarrow i} +  \LSE{\mathbf{x}_a \setminus X_i=x_l} \bigg(\Psi_a(\mathbf{x}_a) + \sum_{j \in \mathcal{N}(a) \setminus i} m_{j \rightarrow a}^{(k)}(x_j) \bigg). \label{eq:facToVarMsgs_log}
\end{align}
We use the shorthand $m_{i \rightarrow a}^{(k)}(x_l)$ and $m_{a \rightarrow i}^{(k)}(x_l)$ to denote $m_{i \rightarrow a}^{(k)}(X_i = x_l)$ and $m_{a \rightarrow i}^{(k)}(X_i = x_l)$, respectively. Note that $\bm{m_{i \rightarrow a}^{(k)}}$ and $\bm{m_{a \rightarrow i}^{(k)}}$ are vectors of size $|X_i|$ and $m_{i \rightarrow a}^{(k)}(x_l)$ and $m_{a \rightarrow i}^{(k)}(x_l)$ are scalars. $\Psi_a(\mathbf{x}_a) = \ln \left( f_a \left( \mathbf{x}_a \right) \right)$ denotes log factor potentials, $z_{i \rightarrow a}$ and $z_{a \rightarrow i}$ are normalization terms, and $\mathcal{N}(a)$ and $\mathcal{N}(i)$ denote the nodes adjacent to $a$ and $i$ respectively. We use the shorthand $\LSE{}$ for the log-sum-exp function. 
The subscript $\mathbf{x}_a \setminus X_i=x_l$ means that we iterate over all variable realizations of variables connected to factor $a$ except that variable $X_i$ is fixed to an assignment of $x_l$.


The belief propagation algorithm proceeds by iteratively updating variable to factor messages and factor to variable messages until they converge 
or a predefined maximum number of iterations is reached. Variable beliefs $b^{(k)}_i$ and factor beliefs $b^{(k)}_a$ are computed to estimate marginals
\begin{equation} \label{eq:calculate_beliefs}
    b^{(k)}_i(x_l) \propto \exp \Big(\sum_{a \in \mathcal{N}(i)} m_{a \rightarrow i}^{(k)}(x_l) \Big),\,\, b^{(k)}_a(\mathbf{x}_a) \propto \exp \bigg(\Psi_a(\mathbf{x}_a) + \sum_{j \in \mathcal{N}(a)} m_{j \rightarrow a}^{(k)}(x_j) \bigg)
\end{equation}

To perform MAP inference with BP, we replace the the log-sum-exp function in Equation \ref{eq:facToVarMsgs_log} with max and decoding variable assignments using variable beliefs $\hat x_i^*=\arg\max_{x_i}b_i(x_i)$.

\paragraph{Graph Neural Networks} Graph Neural Networks (GNNs) take an input graph $G = (V, E)$ with or without node features and edge features and learn representation vectors of all nodes. Modern GNNs follow a neighborhood aggregation strategy~\cite{xu2018representation,gilmer2017neural}, where we iteratively update the representation of a node by aggregating representations of its neighbors. After $k$ iterations of aggregation, a node's representation captures the structural information within its $k$-hop network neighborhood. Formally, the $k$-th layer of a GNN with edge features is
\begin{align} \label{eq:standard_gnn}
h_v^{(k)} = \phi \left( h_v^{(k-1)}, f \left( \left\lbrace \left( h_u^{(k-1)}, e_{uv} \right): u\in \mathcal{N}(v)  \right\rbrace \right) \right)
\end{align}
where $h_v^{(k)}$ is the feature vector of node $v$ at the $k$-th iteration/layer, $\mathcal{N}(v)$ is a set of nodes adjacent to $v$, and $e_{uv}$ is the feature vector of edge $(u, v)$. The choice of $\phi$ and $f$ can be crucial in GNNs \cite{xu2018powerful}.

\section{Methodology}
Ideally, an inference algorithm should produce equivalent (or equivariant) outputs for two isomorphic factor graphs. 
This is a property that BP satisfies, but is overlooked by existing works that perform probabilistic inference using neural networks. We aim to address this problem in this paper.




\subsection{Factor Graph Isomorphism}
In this section we characterize three conditions of factor graph isomorphism, an equivalence relation between factor graphs.



\begin{definition}[]~  \label{def:factor_graph_isomorphism}
\begin{enumerate}
    \item \emph{Global Symmetry}: permuting the global indices of variables or factors in a factor graph results in an isomorphic factor graph. A function or algorithm $F$ respects Global Symmetry if the output of $F$ is either equivariant or invariant to the permutation of global node ordering. 
    \item \emph{Local Variable Symmetry}: permuting the local indices of variables within factors results in an isomorphic factor graph. A function or algorithm $F$ respects Local Variable Symmetry if the output of $F$ is either equivariant or invariant to the permutation of variable orderings within factors. 
    \item \emph{Variable Assignment Symmetry}: permuting the variable assignments within variables will result in an isomorphic factor graph. A function or algorithm $F$ respects  Variable Assignment Symmetry if the output of $F$ is either equivariant or invariant to the permutation of the orderings of variable assignments within variables. 
\end{enumerate}
\end{definition}
Find the formal definition of factor graph isomorphism in Appendix A. This concept is further illustrated in Figure~\ref{fig:factor_graph_isomorphism}. Note that whether an algorithm should be either equivariant or invariant to each of the above permutations depend on the output of the inference algorithm\footnote{For an inference algorithm that infers partition function, it should ideally be invariant to each of the above permutations. For an inference algorithm that estimates marginal probabilities of all variables, it should ideally be equivariant to permutations of global indices of nodes and variable assignment orderings and invariant to variable orderings within factors}. 
Prior work~\cite{kuck2020belief} identified Global Symmetry and Local Variable Symmetry but failed to identify Variable Assignment Symmetry. As we show in our experiments, respecting Variable Assignment Symmetry results in improved performance.

\subsection{Factor-Equivariant Neural Belief Propagation}
Factor-Equivariant Neural Belief Propagation (FE-NBP) is an inference model that incorporates a neural network on top of BP's procedures. In BP, factor-to-variable messages are iteratively updated and our key idea is to learn an adapted damping ratio for those updates:




\begin{equation} \label{eq:fenbp-1}
    m_{a \rightarrow i}^{(k)}(x_l) = \tilde{m}_{a \rightarrow i}^{(k)}(x_l) +  \alpha^{(k)}_{a \rightarrow i}(x_l) \big(m^{(k-1)}_{a \rightarrow i}(x_l) - \tilde{m}^{(k)}_{a \rightarrow i}(x_l)\big)
\end{equation}
for $a\in [M], i\in [N], l\in [|X_i|]$\footnote{For $K \in \mathbb{N}$, we use $[K]$ to denote $\{1, 2, \ldots, K\}$.} and $k$ denotes the number of current iterations. $\tilde{m}^{(k)}_{a \rightarrow i}(x_l)$ has the same definition as in Equation~\ref{eq:facToVarMsgs_log}. Note that $m_{a \rightarrow i}^{(k)}$ is a vector of size $|X_i|$ and $m_{a \rightarrow i}^{(k)}(x_l)$, being the shorthand of $m_{a \rightarrow i}^{(k)}(X_i = x_l)$, is a scalar in $m_{a \rightarrow i}^{(k)}$.

The damping ratio $\alpha^{(k)}_{a \rightarrow i}(x_l)$ is adaptive and calculated using a neural network module as follows:
\begin{equation} \label{eq:fenbp-2}
    \alpha^{(k)}_{a \rightarrow i}(x_l) = \phi_{\text{NN}}\left({m}^{(k-1)}_{a \rightarrow i}(x_l), \tilde{m}^{(k)}_{a \rightarrow i}(x_l), b_i^{(k)}(x_l), \sum_{\mathbf{x}_a\backslash X_i=x_l}b_a^{(k)}(\mathbf{x}_a), \max_{\mathbf{x}_a\backslash X_i=x_l}b_a^{(k)}(\mathbf{x}_a)\right),
\end{equation}
where $b_i^{(k)}$ and $b_a^{(k)}$ denote variable beliefs and factor beliefs, respectively, following Equation~\ref{eq:calculate_beliefs}. The term $\sum_{\mathbf{x}_a\backslash X_i=x_l}b_a^{(k)}(\mathbf{x}_a)$ denotes factor beliefs summed over all variable realizations of $a$ except that $X_i$ it set to an assignment of $x_l$. $\max_{\mathbf{x}_a\backslash X_i=x_l}b_a^{(k)}(\mathbf{x}_a)$) denotes maximum factor belief achievable when fixing $X_i$ to $x_l$ in $\mathbf{x}_a$. 
$\phi_{\text{NN}}$ is a neural network function whose parameters are shared for all updates of the factor-to-variable messages. Output of $\phi_{\text{NN}}$ is a scalar. 
These five input features are designed based on the intuitions of classical improvements of BP, such as incorporating the information of the residual of messages~\cite{gal2006residual} (i.e. $|{m}^{(k-1)}_{a \rightarrow i}(x_l)-\tilde{m}^{(k)}_{a \rightarrow i}(x_l)|$) or inducing more calibrated beliefs~\cite{koller2009probabilistic} (i.e.  $|b_i^{(k)}(x_l)- \sum_{\mathbf{x}_a\backslash X_i=x_l}b_a^{(k)}(\mathbf{x}_a)|$ and $|b_i^{(k)}(x_l)- \max_{\mathbf{x}_a\backslash X_i=x_l}b_a^{(k)}(\mathbf{x}_a)|$) to facilitate faster, more often convergence and better performances.

\begin{prop} \label{prop:fenbp}
For any parameterization of $\phi_{\text{NN}}$, FE-NBP respects Global Symmetry, Local Variable Symmetry, and Variable Assignment Symmetry of factor graph isomorphism in Definiton~\ref{def:factor_graph_isomorphism}. 
\end{prop}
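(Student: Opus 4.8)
The plan is to argue by induction on the message-passing iteration $k$ that every quantity computed by FE-NBP --- the messages $m_{i\to a}^{(k)}$ and $m_{a\to i}^{(k)}$, the intermediate messages $\tilde m_{a\to i}^{(k)}$, the beliefs $b_i^{(k)}$ and $b_a^{(k)}$, and the damping ratios $\alpha_{a\to i}^{(k)}$ --- transforms in the prescribed way (equivariantly or invariantly) under each of the three symmetries of Definition~\ref{def:factor_graph_isomorphism}. Because the three symmetries act on disjoint pieces of the index structure (global node labels, within-factor variable order, within-variable assignment order) and compose independently, I would establish the claim for each symmetry separately using the same inductive template; the base case is the (uniform or zero) initialization of the messages, which is manifestly fixed by every relabeling.

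For the inductive step I would check that each defining equation intertwines the relevant group action. The essential structural facts are: (i) the aggregation operators appearing in BP --- the sum $\sum_{c\in\mathcal N(i)\setminus a}$ in Equation~\ref{eq:varToFacMsgs_log}, the $\LSE{}$ and scope-sum in Equation~\ref{eq:facToVarMsgs_log}, and the sum/max over $\mathbf x_a\setminus X_i=x_l$ in Equation~\ref{eq:fenbp-2} --- are invariant to reordering their summands, and (ii) the neural module $\phi_{\text{NN}}$ is applied pointwise to each triple $(a,i,x_l)$ with a single shared parameter set and receives only the five scalar features as input, never a node index or an assignment index. Fact (ii) is what makes the argument work ``for any parameterization'': since $\phi_{\text{NN}}$ cannot see labels, it automatically commutes with every relabeling, so $\alpha_{a\to i}^{(k)}$ inherits the transformation behavior of its inputs, and the damped update in Equation~\ref{eq:fenbp-1}, being an affine combination of $m_{a\to i}^{(k-1)}$, $\tilde m_{a\to i}^{(k)}$ and $\alpha_{a\to i}^{(k)}$ computed coordinate-by-coordinate in $x_l$, preserves whatever transformation those terms carry.

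Concretely, for Global Symmetry I would push a pair of permutations $\sigma\in S_N$, $\tau\in S_M$ through the neighborhood sets $\mathcal N(i)$, $\mathcal N(a)$ (which are relabeled but otherwise unchanged), observing that every update depends on the graph only through these local neighborhoods, so all messages and beliefs are permuted exactly as their index tuples. For Local Variable Symmetry the only affected object is the scope of a factor $a$; here I would use that permuting the variable order within $a$ permutes the axes of the potential tensor $\Psi_a$ consistently with the summation/LSE indices, leaving every aggregated quantity numerically unchanged (invariance), so $\alpha$ and hence $m_{a\to i}^{(k)}$ are unchanged. For Variable Assignment Symmetry a permutation $\pi_i$ of the $|X_i|$ assignments relabels the $x_l$-coordinate of all messages touching $X_i$ together with the corresponding axis of $\Psi_a$ and $b_a$; the per-coordinate structure of Equations~\ref{eq:varToFacMsgs_log}--\ref{eq:fenbp-1} then yields equivariance.

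The main obstacle is the bookkeeping in the Variable Assignment Symmetry case for the feature vector of $\phi_{\text{NN}}$: I must verify that all three belief-based features --- $b_i^{(k)}(x_l)$, $\sum_{\mathbf x_a\setminus X_i=x_l}b_a^{(k)}(\mathbf x_a)$, and $\max_{\mathbf x_a\setminus X_i=x_l}b_a^{(k)}(\mathbf x_a)$ --- transform by the same permutation $\pi_i$ on the $x_l$ index, which requires knowing that the partial sum and partial max over $\mathbf x_a\setminus X_i$ commute with relabeling the $X_i$ axis. Granting the inductive hypothesis on the beliefs, this reduces to the invariance of sum and max under reindexing the remaining coordinates, after which equivariance of $\alpha_{a\to i}^{(k)}$ and hence of $m_{a\to i}^{(k)}$ follows immediately, closing the induction.
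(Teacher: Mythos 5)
Your proposal is correct and matches the paper's own argument: the paper likewise proves Variable Assignment Symmetry by induction on the iteration $k$ (base case: constant message initialization; inductive step: pushing the permutation through Equations~\ref{eq:varToFacMsgs_log}, \ref{eq:facToVarMsgs_log}, \ref{eq:fenbp-2}, and \ref{eq:fenbp-1}, including the belief-based features of $\phi_{\text{NN}}$), and handles Local Variable Symmetry by the same observation you make --- every quantity is a scalar indexed by $(a,i,x_l)$ and $\phi_{\text{NN}}$ never sees any ordering information, which is exactly why the claim holds for any parameterization. The only cosmetic difference is that you run the same inductive template explicitly for Global Symmetry as well, whereas the paper treats that case (and BP's Local Variable Symmetry) by inspection and by deferring to prior work.
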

We can easily verify that Proposition~\ref{prop:fenbp} hold since messages are updated individually similar to BP. Due to the space limit, we leave more detailed proofs and explanations in Appendix B.

In BPNN~\cite{kuck2020belief}, they also propose to adjust factor-to-variable messages in BP by a neural network. However, FE-NBP offers major advantages over their model: (1) BPNNs does not respect Variable Assignment Symmetry as they apply neural networks directly on message vectors and MLPs are not equivariant to permutation of input vectors. The neural network module $\phi_{\text{NN}}$ in FE-NBP adjusts factor-to-variable messages individually for each variable assignments and thus respects Variable Assignment Symmetry; (2) their neural network only takes in messages as input whereas FE-NBP also incorporates information from variable beliefs and factor beliefs to facilitate the process of calibrating BP's message updates; (3) in their paper, they only conduct experiments on partition function estimation while we conduct experiments with FE-NBP on both marginal inference and MAP inference. For FE-NBP to perform MAP inference, we simply change the log-sum-exp in Equation 3 to log-max-exp and train our model with MAP inference objectives instead of marginal inference objectives. FE-NBP can provide an upper bound and a lower bound for the probability of the output MAP assignment. Find proof in Appendix C.

\subsection{Factor-Equivariant Graph Neural Network}
Prior works~\cite{koller2009probabilistic,zhang2019factor} have attempted to use GNN-based models to perform inference. While these GNN-based models respects Global Symmetry, they do not respect further symmetries of fator graphs. 
In this section, we first provide a perspective of BP by rewriting its formulation and showing that BP can be considered as a non-parameterized GNN. 
Building on this observation, we propose Factor-Equivariant GNN, a highly expressive GNN architecture that respects Global Symmetry and Local Variable Symmetry.


\paragraph{Rewriting Message Passing of Belief Propagation}
In Equation~\ref{eq:varToFacMsgs_log} and \ref{eq:facToVarMsgs_log}, $m_{i \rightarrow a}^{(k)}(x_l)$ and $m_{a \rightarrow i}^{(k)}(x_l)$ are scalars. In this paragraph, we first rewrite the formulation of a standard log-space BP with damping in terms of vectors and tensors:
\begin{align} 
     \bm{m_{i \rightarrow a}^{(k)}} & = \bm{\tilde{m}_{i \rightarrow a}^{(k)}} + \alpha  \big(\bm{m_{i \rightarrow a}^{(k-1)}} - \bm{\tilde{m}_{i \rightarrow a}^{(k)}}\big),\,\, \bm{\tilde{m}_{i \rightarrow a}^{(k)}} = -z_{i \rightarrow a} + \sum_{c \in \mathcal{N}(i) \setminus a} \bm{m_{c \rightarrow i}^{(k-1)}}, \label{eq:bp_tensor_form1} \\
    \bm{m_{a \rightarrow i}^{(k)}} & = \bm{\tilde{m}_{a \rightarrow i}^{(k)}} +  \alpha \big(\bm{m_{a \rightarrow i}^{(k-1)}} - \bm{\tilde{m}_{a \rightarrow i}^{(k)}} \big),\,\, \bm{\tilde{m}_{a \rightarrow i}^{(k)}}  = -z_{a \rightarrow i} +  \LSE{X_a \setminus X_i} \bigg(\bm{\Psi_a} + \bigoplus_{j \in \mathcal{N}(a) \setminus i} \bm{m_{j \rightarrow a}^{(k)}}  \bigg). \label{eq:bp_tensor_form2}
\end{align}
Vectors and tensors are boldfaced but not scalars. The terms $\bm{m_{i \rightarrow a}^{(k)}}$ and $\bm{m_{a \rightarrow i}^{(k)}}$ are vectors over the states of variable $x_i$ and thus have length $|X_i|$. Log factor potential, $\bm{\Psi_a}$, is a multi-dimensional tensor with shape of $\bigg(\left|X_{a1}\right|, \left|X_{a2}\right|, \cdots, \left|X_{aK}\right| \bigg)$ where $X_{ai}$ denotes the $i$-th variable that associates with factor $a$. We use $X_a$ to denote the set of variables that are connected to factor $a$. Operator $\bigoplus$ denotes ``tensor sum'' (think of outer sum for tensors not just for one-dimensional vectors):$$
\bm{v} = [v_1, v_2, \cdots, v_m], \bm{w} = [w_1, w_2, \cdots, w_n], 
\bm{v} \bigoplus \bm{w} = \begin{bmatrix}
    v_1+w_1 & v_1+w_2 &  \dots  & v_1 +w_n \\
   v_2 + w_1 & v_2+w_2 & \dots  & v_2+w_n \\
    \vdots & \vdots & \ddots &  \vdots\\
    v_m + w_1 & v_m + w_2 & \dots  & v_m+w_n
\end{bmatrix}
$$
Thus, $\bigoplus_{j \in \mathcal{N}(a) \setminus i} \bm{m_{j \rightarrow a}^{(k)}}$ would have a dimension of\newline $\bigg(\left|X_{a1}\right|, \left|X_{a2}\right|, \cdots, \left|X_{a(i-1)}\right|,\left|X_{a(i+1)}\right|, \cdots,  \left|X_{aK}\right| \bigg)$\footnote{We slightly abused the notation by assuming variable $X_i$ is equivalent to the $i$-th variable of factor $a$ ($X_{ai}$)}. It is added with $\Psi_a$ at all corresponding dimensions except at the dimension of $X_i$ and then $\LSE{X_a \setminus X_i}$ would then sum over all other variable dimension except at the dimension of variable $X_i$, resulting a vector of size $|X_i|$.
We make the observation that standard BP is just an instance of GNN without trainable parameters where message passing is performed between message vectors. More specifically, Equation~\ref{eq:bp_tensor_form1} and ~\ref{eq:bp_tensor_form2} can be viewed as instances of Equation~\ref{eq:standard_gnn} ($\bm{m_{i \rightarrow a}^{(k)}}$ and $\bm{m_{a \rightarrow i}^{(k)}}$ corresponds to node feature vectors $h_v^{(k)}$, damping corresponds to function $\phi$, and the aggregation of neighboring messages corresponds to function $f$).

\paragraph{Factor-equivariant Graph Neural Network}
We propose Factor-Equivariant GNN, a GNN architecture that satisfies two important properties of BP: no double counting of messages and is equivariant to permutation of variable orderings within factors. FE-GNN avoids double counting of messages by performing message passing between message vectors instead of variables and factors. The key insight that makes FE-GNN equivariant to permutation of variable orderings within factors is that instead of treating factor potentials as input node or edge features, we combine factor potentials with message vectors in a similar fashion to BP by incorporating an operation unconventional for GNNs - tensor sum.

In detail, the model looks like this: 
\begin{align} 
  \bm{m_{i \rightarrow a}^{(k)}} &= \phi_{\text{NN}_1}\left(\bm{m_{i \rightarrow a}^{(k-1)}}, \sum_{c \in \mathcal{N}(i) \setminus a} \text{MLP}_1\bigg(\bm{m_{c \rightarrow i}^{(k-1)}} \bigg)\right), \label{eq:fe-gnn1} \\
     \bm{m_{a \rightarrow i}}^{(k)} &= \phi_{\text{NN}_2} \left(\bm{m_{a \rightarrow i}^{(k-1)}}, \LSE{X_a \setminus X_i} \bigg( \bm{\Psi_a}  \,\,+ \bigoplus_{j \in \mathcal{N}(a) \setminus i} \text{MLP}_2\bigg(\bm{m_{j \rightarrow a}^{(k)}} \bigg) \right) \label{eq:fe-gnn2}
\end{align}
%
where $\bm{m_{j \rightarrow a}^{(k)}}$ and $\bm{m_{a \rightarrow i}^{(k)}}$ are node embeddings at layer $k$. $\bm{m_{c \rightarrow i}^{(k-1)}}$ and $\bm{m_{j \rightarrow a}^{(k-1)}}$ are neighboring node embeddings of $\bm{m_{j \rightarrow a}^{(k)}}$ and $\bm{m_{a \rightarrow i}^{(k)}}$ respectively. $\phi_{\text{NN}_1}$ and $\phi_{\text{NN}_2}$ are neural network modules, which are implemented as GRUs~\cite{chung2014empirical} in our experiments. After multiple message passing layers, variable beliefs (or variable marginals) can be estimated by another neural network module such as
$b^{(k)}_i = \text{MLP}_3 \Big(\sum_{a \in \mathcal{N}(i)} \bm{m_{a \rightarrow i}^{(k)}}\Big)$. FE-GNN is trained end-to-end on an objective just like any other GNNs. In our experiments, FE-GNN is trained directly on ground truth marginals of all variables. Additionally, hand-crafted features like those used by FE-NBP can be concatenated with neighboring node embeddings before all neighboring information is aggregated. 

\begin{prop}
For any parameterization of $\phi_{\text{NN}_1}$ and $\phi_{\text{NN}_2}$, FE-GNN respects Global Symmetry and Local Variable Symmetry of factor graph isomorphism in Definiton~\ref{def:factor_graph_isomorphism}. 
\end{prop}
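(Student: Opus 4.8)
The plan is to prove both symmetries by induction on the layer index $k$, isolating the factor-to-variable update (Equation~\ref{eq:fe-gnn2}) as the only nontrivial step and reducing everything else to standard facts about sum-aggregation GNNs. First I would fix notation: a symmetry acts on a factor graph through permutations --- one relabeling global variable indices, one relabeling global factor indices, and, for Local Variable Symmetry, one relabeling the local argument order within each factor (which permutes the axes of the tensor $\bm{\Psi_a}$). For each symmetry I would show that the induced action carries the message sets $\{\bm{m_{i \rightarrow a}^{(k)}}\}$ and $\{\bm{m_{a \rightarrow i}^{(k)}}\}$ to the messages computed on the transformed graph, so that the readout $b_i^{(k)} = \text{MLP}_3(\sum_{a \in \mathcal{N}(i)} \bm{m_{a \rightarrow i}^{(k)}})$ transforms consistently (equivariantly under variable relabeling, invariantly otherwise). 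The base case needs only that the message initialization respect the same action, which holds for any local initialization such as a constant.

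Global Symmetry is the routine half. Since Equations~\ref{eq:fe-gnn1} and~\ref{eq:fe-gnn2} reference nodes only through the neighborhoods $\mathcal{N}(i)$ and $\mathcal{N}(a)$ and aggregate incoming messages by an unordered sum inside $\text{MLP}_1$ and inside the tensor sum, the computation at a node depends on its neighborhood only as a multiset and never on the global names of the nodes. By the inductive hypothesis, relabeling the global indices merely relabels which message vector sits on which edge; applying the identical update then yields the relabeled messages. Summing over $a \in \mathcal{N}(i)$ in the readout shows the output marginals are permuted by exactly the variable relabeling and are unaffected by factor relabeling, as required by Definition~\ref{def:factor_graph_isomorphism}.

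The main work, and the step I expect to be the obstacle, is Local Variable Symmetry: I must show that $\bm{m_{a \rightarrow i}^{(k)}}$ is \emph{invariant} to the order in which the other variables $j \in \mathcal{N}(a) \setminus i$ are listed within factor $a$. The key algebraic facts are that the tensor sum $\bigoplus$ is commutative up to a corresponding permutation of tensor axes, and that a local reordering permutes the axes of $\bm{\Psi_a}$ by that same permutation. Hence, under a reordering that fixes the position of $X_i$, both $\bm{\Psi_a}$ and $\bigoplus_{j} \text{MLP}_2(\bm{m_{j \rightarrow a}^{(k)}})$ have their non-$X_i$ axes permuted identically, so their sum is the same tensor up to that axis permutation; and since $\LSE{X_a \setminus X_i}$ reduces over exactly the non-$X_i$ axes, while log-sum-exp over a set of axes is independent of their order, the resulting length-$|X_i|$ vector $\bm{\tilde{m}_{a \rightarrow i}^{(k)}}$ is identical for every local ordering. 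The careful bookkeeping --- verifying that the operand permutation of the tensor sum induces exactly the axis permutation matching $\bm{\Psi_a}$ while the $X_i$ axis is held fixed --- is the crux; once established, invariance propagates unchanged through $\phi_{\text{NN}_2}$ and the readout. I would close by noting that this argument deliberately does not extend to Variable Assignment Symmetry, since $\phi_{\text{NN}_1}$, $\phi_{\text{NN}_2}$, and the $\text{MLP}$s act coordinatewise on message vectors whose coordinates index variable assignments and are not equivariant to permuting those coordinates, consistent with the proposition claiming only the first two symmetries.
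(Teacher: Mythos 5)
Your proposal is correct and takes essentially the same approach as the paper: the paper's proof of Local Variable Symmetry is exactly your key step (each transformed message vector is summed with $\bm{\Psi_a}$ on the axis of its own variable, so a local reordering permutes the axes of $\bm{\Psi_a}$ and of the tensor sum identically, and $\LSE{X_a \setminus X_i}$ reduces over the non-$X_i$ axes in an order-independent way), with Global Symmetry following from unordered neighborhood aggregation. The paper states this more tersely than your layer-by-layer induction; the only nitpick is that restricting attention to reorderings that fix the position of $X_i$ is unnecessary, since the final reduction leaves only the $X_i$ axis regardless of where it sits.
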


FE-GNN respects Local Variable Symmetry of factor graph isomorphism similar to how BP respects Local Variable Symmetry. The term $\LSE{X_a \setminus X_i}\bigg(... \bigg)$ is not subject to the orderings of variables within factor $a$ since each transformed message vector is summed with factor potentials $\bm{\Psi_a}$ at the dimension that corresponds to the particular variable, and $\LSE{X_a \setminus X_i}$ eventually sums out all the dimension except for the dimension of a particular variable $X_i$.

\section{Experiments}
We conduct three experiments to support our claims of contribution. First, we conduct experiments on Ising model datasets of different sample sizes to demonstrate the competence of FE-NBP and FE-GNN on marginal inference. Second, we evaluate FE-GNN on a synthetic dataset of ``asymmetric'' factor graphs to support our claim that FE-GNN is superior to other GNN-based models as it respects Local Variable Symmetry. Finally, we conduct experiments on real-world UAI-challenge datasets and show that FE-NBP outperforms BPNN~\cite{kuck2020belief} by a large margin, supporting our claim that respecting Variable Assignment Symmetry is a beneficial inductive bias to incorporate.

\begin{figure}[t]
\centering
    \includegraphics[width=.9\textwidth]{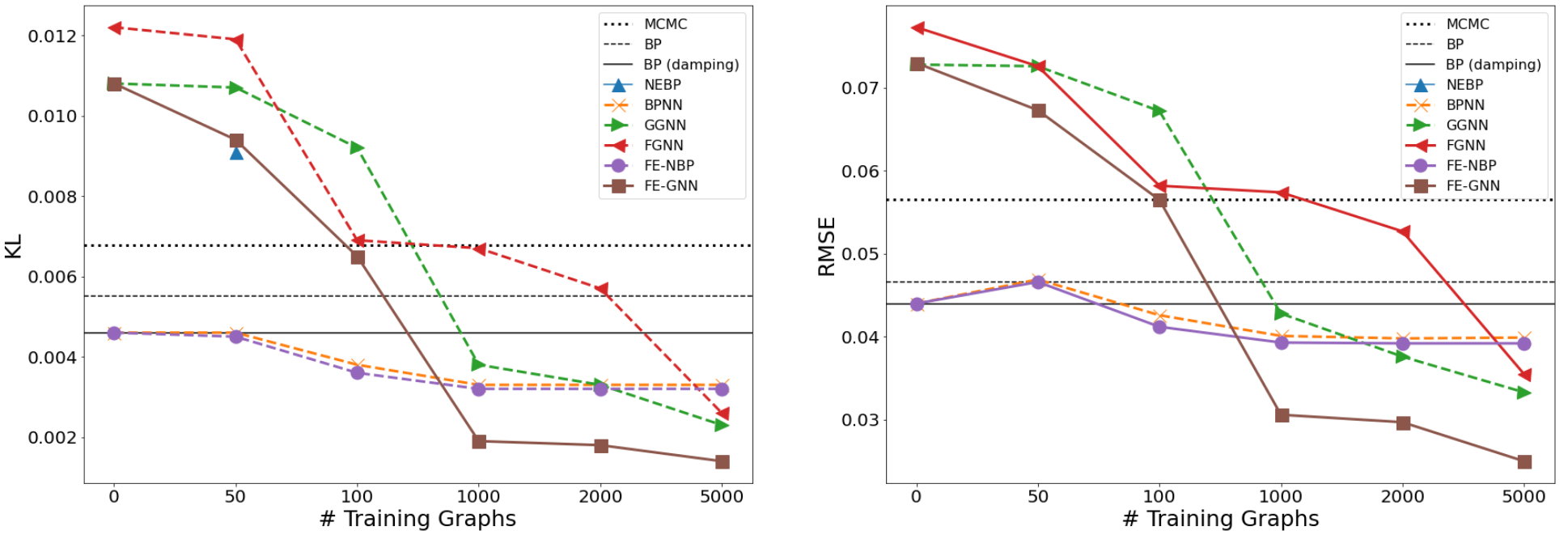}
    \caption{Left: KL divergence between estimated marginals and ground truth for each model trained on Ising model datasets of different sample sizes. Right: RMSE between estimated marginals and ground truth for each model trained on Ising model datasets of different sample sizes.}
    \label{fig:results}
    \vspace{-3mm}
\end{figure}

\subsection{Ising Models}
The goal of this experiment is to compare FE-NBP and FE-GNN with other existing inference models and to investigate how the amount of training data and the expressiveness of a model affect the performance of each inference model.
\vspace{-2mm}
\paragraph{Experiment setting}
Ising model is a type of Binary Markov Random Field and is usually represented with $(\mathbf{J}$, $\mathbf{b})$ where $\mathbf{b}$ biases individual variables and $\mathbf{J}$ couples pairs of neighbor variables. Following a common experimental settings to evaluate inference algorithms~\cite{kuck2020belief,satorras2021neural,Yoon2018InferenceIP}, we generate 4x4 grid structured Ising models $(\mathbf{J}$, $\mathbf{b})$ where $\mathbf{x} \in\{+1,-1\}^{|\mathcal{V}|}$, $b_i \sim \mathcal{N}(0, 0.25^2)$, $J_{ij} \sim \mathcal{N}(0, 1)$ and $|\mathcal{V}|=16$ is the number of variables. Ground truth marginals can be computed exactly using junction tree algorithm~\cite{lauritzen1988local}. We vary the amount of data the model is trained on and evaluate them on the same testing set of 1000 factor graphs. 


We evaluated FE-NBP and FE-GNN against the following baselines: Markov chain Monte Carlo~\cite{neal1993probabilistic}, Belief Propagation (BP)~\cite{koller2009probabilistic}, BP with damping of 0.5~\cite{Murphy1999LoopyBP}, Neural Enhanced Belief Propagation (NEBP)~\cite{satorras2020neural}, Belief Propagation Neural Networks (BPNN)~\cite{kuck2020belief}, Gated Graph Neural Network (GGNN)~\cite{li2015gated}, and Factor Graph Neural Network (FGNN)~\cite{zhang2019factor}. For more details of these baselines, refer to the Related Works section. Note that the authors of NEBP did not release their code, thus we take the KL divergence results reported on their paper as we have the same experimental settings. For more model and training details, refer to Appendix D.%
\vspace{-1mm}
\paragraph{Results}
As shown in Figure~\ref{fig:results}, the performance of BP, Damping BP, and MCMC are unchanged as they are not learnable algorithms. 
Observe that FE-NBP and BPNN, when untrained, both correspond to Damping BP as we initialize all neural network parameters to zero (sigmoid function at 0 equals 0.5). When we have 50 or 100 training instances available, FE-NBP performs the best and all the GNN-based models performs poorly. 
As we increase the number of training instances to more than 1000, FE-GNN outperforms all baselines and all GNN-based models continue to improve while BPNN and FE-NBP plateaued. The results can be interpreted by the classic bias-variance trade-off: learning algorithms need (inductive) bias tailored to specific learning problems in order to make the target function easier to approximate; learning algorithms also need expressivity so that they can adapt to training data but higher ``flexibility'' can lead to higher variance. BPNN and FE-NBP are algorithms with high (inductive) bias but are relatively less expressive because they are only learning on top of the backbone of BP, whereas GNN-based models are end-to-end parameterized models with less bias but higher expressiveness. In other words, BPNN and FE-NBP are algorithms in the lower sample complexity regimes while GNN-based models are algorithms in the higher sample complexity regimes.

\subsection{Factor-Equivariant GNN on Asymmetric Binary Markov Random Field}
To further show that the importance of incorporating Local Variable Symmetry as an inductive bias for neural network-based inference models, we compare FE-GNN with existing GNN-based inference models on a dataset of \emph{asymmetric} binary markov random field (BMRF)~\cite{merle2019turing}. To clarify, asymmetric means that at least one factor potential is an asymmetric tensor.


\paragraph{Experiment setting}
In this experiment, we generate 6000 instances of asymmetric BMRF. 5000 instances are used for training and 1000 for testing. BMRF are generated in a similar fashion to the previous experiment except that now a factor $a$ that couples variable $i$ and $j$ would have a factor potential of $f_a=\begin{bmatrix}
    e^{J_{ij}+J_{ji}} & e^{-2*J_{ij}} \\
   e^{-2*J_{ji}} & e^{J_{ij}+J_{ji}}  \\
\end{bmatrix}$ where $J_{ij} \sim \mathcal{N}(0, 1)$ and $J_{ji} \sim \mathcal{N}(0, 1)$. $J_{ij} = J_{ji} = 0$ if variable $i$ and $j$ is not coupled by any factor. As for baselines, we compared FE-GNN with Gated Graph Neural Network (GGNN)~\cite{Yoon2018InferenceIP}, Factor Graph Neural Network (FGNN)~\cite{zhang2019factor}, BP~\cite{koller2009probabilistic}, and BP with damping (0.5)~\cite{Murphy1999LoopyBP}. We follow the same model and training settings as the previous experiment.

\begin{table}[t]
\centering
\scalebox{.8}{
\begin{tabular}{lccccc}
	\toprule
Model & BP              & BP (damping) & GGNN   & FG-GNN & FE-GNN \\ \midrule
KL    & 0.0286          & 0.0236       & 0.3770 & 0.3780 & \textbf{0.0109} \\
RMSE  & 0.0818 & 0.0748       & 0.3877 & 0.3877 & \textbf{0.0584} \\ \bottomrule
\end{tabular}
}
\vspace{1mm}
\caption{KL divergence and RMSE between estimated marginals and ground truth for FE-GNN and other baselines on a dataset of asymmetric binary markov random field.}
\label{tb:asymmetric}
\vspace{-5mm}
\end{table}

\paragraph{Results}
In Table~\ref{tb:asymmetric}, we present KL divergence and RMSE between estimated marginals and ground truth for each model. FE-GNN outperforms all other models by a significant margin. More importantly, observe how other GNN-based inference models perform poorly. The reason is that existing GNN-based inference models like GGNN or FGNN do not respect Local Variable Symmetry of factor graph isomorphism. More specifically, they treat factor potentials as node/edge features and do not take variable orderings within a factor into consideration. As a result, they fail to learn effectively when factor potentials are asymmetric tensors. This experiment shows that FE-GNN, designed with tensor operations that take variable orderings into consideration, is able to learn on more complex factor graphs given the same amount of training data due to its superior inductive bias.

\subsection{Factor-Equivariant Neural BP on the UAI-challenge datasets}
In this experiment, we evaluate FE-NBP on MAP inference with the UAI-challenge datasets\footnote{\url{http://sli.ics.uci.edu/~ihler/uai-data/}}. We did not evaluate GNN-based models on UAI-challenge datasets as the number of instances in those datasets is very limited and is thus not suitable for evaluating highly expressive models (models with high sample complexity). 
\paragraph{Experiment setting}

We randomly split the dataset into a training (70\%) and a testing (30\%) dataset, and adopt the same evaluation metric as the UAI-2012 challenge: $$ \frac{1}{|\mathcal{D}|}\sum_{G\in\mathcal{D}}\left|\frac{\log\text{score}(x^*)-\log\text{score}(\hat x^*)}{\log\text{score}(x^*)}\right|,$$
where the log-score of a state is defined as $\sum_{a=1}^M\log f_a(\mathbf{x}_a)$, $G$ stands for the factor graphs, $\mathcal{D}$ is the dataset, $x^*$ is the true MAP assignment, and $\hat x^*$ is the estimated MAP assignment.

We compare FE-NBP with BPNN~\cite{kuck2020belief} and multiple traditional inference algorithms including BP~\cite{koller2009probabilistic}, BP with damping of 0.5~\cite{Murphy1999LoopyBP} (with standard argmax decoding), BP with an advanced heuristic sequential MAP-assignment decoding mechanism\footnote{implemented by libdai, \url{https://staff.fnwi.uva.nl/j.m.mooij/libDAI/}}~\cite{Mooij_libDAI_10}, MPLP~\cite{globerson2007fixing}, and different variants of the local search algorithms~\cite{pearl1984heuristics, Dept.2015}. 
For model and training details, refer to Appendix E.

\paragraph{Results} In Table~\ref{tab:intra-uai}, we show the performances of different MAP inference models and algorithms on the UAI-challenge datasets. FE-NBP consistently outperforms BPNN by a significant margin on all four datasets. That supports our claim that respecting Variable Assignment Symmetry of factor graph isomorphism is an effective inductive bias since one of the biggest distinction between FE-NBP and BPNN is that FE-NBP respects Variable Assignment Symmetry but BPNN does not. FE-NBP also outperforms BP and BP (damping) which indicates the benefits of learning and adapting, even on small datasets. Compared with other traditional inference algorithms, FE-NBP outperforms all baselines on the Grids and DBN datasets while achieving competitive performance on the other two datasets. 

\begin{table}[t]
	\centering
	\scalebox{.8}{
	\begin{tabular}{lcccc}
		\toprule
		&  Grids & Segment & ObjDetect & DBN \\
		\midrule
		\#samples & 11 & 100 & 116 & 66 \\
		\#variables (mean) & 290.9 & 229.1 & 60 & 780.2  \\
		var-cardinality (max) & 2 & 21 & 21 & 2 \\
		\midrule
		\midrule
		best-first-search & 0.15 & 0.37 & inf & inf \\
		beam-search & 0.13 & 0.37 & inf & inf \\
        MPLP & 0.25 & \textbf{.003} & 0.28 & 0.33 \\
		BP & 0.71 & 4.27 & 0.98 & 0.77 \\
		BP (damping) & 0.22 & 0.09 & 0.62 & 0.18 \\
 		BP+seq-decoding  & 0.41 & 0.03 & \textbf{0.01} & 0.25 \\
		BP+seq-decoding (damping) & 0.22 & 0.08 & 0.03 & 0.05 \\
		\midrule 
		BPNN~\cite{kuck2020belief} & 0.21 & 0.25 & 0.20 & 0.33  \\
		FE-NBP (ours) & \textbf{0.11} & 0.09 & 0.03 & \textbf{0.03} \\
		\bottomrule
	\end{tabular}
	}
	\vspace{2mm}
	\caption{Log-scores on UAI-challenge datasets. FE-NBP performs the best on two out of four datasets while achieving competitive performance on the other two datsets. Note that when the output assignment has a ground truth probability of zero, the log-score will be inf.}
	\label{tab:intra-uai}
	\vspace{-5mm}
\end{table}

\section{Related Works}

Belief propagation neural network (BPNN)~\cite{kuck2020belief} is a class of inference models that takes a factor graph as input and estimates factor graph's log partition function. BPNN strictly generalizes (sum-product) BP and still guarantees to give a lower bound to the partition function upon convergence for a class of factor graphs by finding fixed points of BP\footnote{For lack of space, find proofs at \cite{kuck2020belief}.}. 
More specifically, BPNN keeps the variable-to-factor messages as it is in Equation~\ref{eq:varToFacMsgs_log} but modifies factor-to-variable messages (Equation~\ref{eq:facToVarMsgs_log}) using the output of a learned operator. 
Similar to the idea of correcting belief propagation's outputs by a learned neural network module, \cite{satorras2021neural} proposed Neural Enhanced Belief Propagation (NEBP), a hybrid model that runs conjointly a GNN with belief propagation. The GNN receives as input messages from belief propagation at every inference iteration and outputs a calibrated version of them. However, these existing works that attempt to combine belief propagation and neural network fail to take Variable Assignment Symmetry of factor graph isomorphism into consideration. In this paper, we address this issue by proposing FE-NBP.


Instead of augmenting BP with neural networks, some other works aim to devise end-to-end trainable inference systems. In \cite{Yoon2018InferenceIP}, they apply Gated Graph Neural Network (GGNN)~\cite{li2015gated} to graphical model inference. Factor Graph Neural Network (FGNN)~\cite{zhang2019factor} is another graph neural network model proposed to perform MAP inference on factor graphs. Mimicing the procedures of max-product belief propagation, FGNN consists of Variable-to-Factor modules and a Factor-to-Variable modules. Nevertheless, these works fail to consider Local Variable Symmetry and Variable Assignment Symmetry. In this paper, we propose FE-GNN, which considers one more symmetry than existing GNN-based inference models.

\section{Conclusion}
In this paper, we identify factor graph isomorphism and introduce two neural network-based inference models that takes advantages of such inductive biases: Factor-Equivariant Neural Belief Propagation (FE-NBP) and Factor-Equivariant Graph Neural Networks (FE-GNN). FE-NBP is an inference model that incorporates a learnable neural network module on top of BP while respecting all symmetries of factor graphs. FE-GNN is an end-to-end trainable GNN model that has great expressivity while respecting one more symmetry than existing GNN-based models. We perform experiments on both marginal inference and MAP inference and show that FE-NBP and FE-GNN achieves state-of-the-art performance on different sample complexity regimes. We further perform experiments to support that the proposed inductive biases are indeed beneficial for neural network-based inference models.


\newpage
\appendix 
\section{Factor Graph Isomorphism}
A factor graph is represented as\footnote{Note that a factor graph can be viewed as a weighted hypergraph where factors define hyperedges and factor potentials define hyperedge weights for every variable assignment within the factor.} $G = (A, P, I, X)$.  %
$A \in \{0,1\}^{M \times N}$ is an adjacency matrix over $M$ factor nodes and $N$ variable nodes, where $A_{ai}=1$ if the i-th variable is in the scope of the a-th factor and $A_{ai}=0$ otherwise. $P$ is an ordered list of $M$ factor potentials, where the a-th factor potential, $P_a$, corresponds to the a-th factor (row) in $A$ and is represented as a tensor with one dimension for every variable in the scope of $P_a$. $I$ is an ordered list of ordered lists that locally indexes variables within each factor. $I_a$ is an ordered list specifying the local indexing of variables within the a-th factor. $I_{ak} = i$ specifies that the k-th dimension of the tensor $P_a$ corresponds to the i-th variable (column) in $A$. $X$ is an ordered list of $N$ variables that specifies possible states of all variables. $X_i$ is a list specifying the local indexing of states within the i-th variable. $X_{ij} = x$ specifies that the j-th state of variable $i$ is $x$. According to our definition, $P_a$ will have the shape of $\bigg(\left|X_{I_{a1}}\right|, \left|X_{I_{a2}}\right|, \cdots, \left|X_{I_{aK}}\right|\bigg)$ where $K$ is the number of variables associated with factor $a$ and $\left|X_i\right|$ is the number of states of variable $i$. We will use $P_a(X_{i}=x)$ to denote the resulting factor potential tensor of $P_a$ when $X_i = x$. 

Two factor graphs is isomorphic when they meet the following conditions:

\begin{definition}[Factor Graph Isomorphism] \label{def:fg_isomorphism_}
Factor graphs $G = (A, P, I, X)$ and $G' = (A', P', I', X')$ with $A \in \{0,1\}^{M \times N}$ and $A' \in \{0,1\}^{M' \times N'}$ are isomorphic if and only if 
$M=M'$, $N=N'$, and
\begin{enumerate}
    \item  There exist bijections\footnote{For $K \in \mathbb{N}$, we use $[K]$ to denote $\{1, 2, \ldots, K\}$.}
    $f_F: [M] \to [M]$ and $f_V: [N] \to [N]$
    such that $A_{ai} = A'_{bj}$ for all $a \in [M]$ and $i \in [N]$, where $b=f_F(a)$ and $j=f_V(i)$.
   
    \item There exists a bijection $F_a : [|I_a|]\rightarrow [|I'_b|]$ for every factor $a \in [M]$ in $G$ and factor $b= f_F(a)$ in $G'$ such that $f_V\big(I_{ak}\big) = I'_{bl}$ \, and $P_a = \sigma_a\big(P'_b\big)$ for all $k \in [|I_i|]$, where $l=F_a(k)$, and $\sigma_a\big(P'_b\big)$ denotes permuting the dimensions of the tensor $P'_b$ to the order of $\big((F_a(I_{a1}), F_a(I_{a2}), \dots, F_a(I_a(|I_a|))\big)$.
    \item There exists a bijection $F_i: [|X_i|] \rightarrow [|X'_j|]$ for every variable $i$ in $G$ and variable $j = f_V(i)$ in $G'$, such that for all factors $a$ in $G$ that associates with variable $i$ and factor $b = f_F(a)$ in $G'$, $P_a(X_{i}=k) = \sigma(P'_{b})(X_{j}=l)$ for all $k \in [|X_i|]$, where $l=F_i(k)$ and $\sigma(P'_{b})$ denotes permuting the order of variables states of all variables $v$ in $\sigma_a(P'_b)$ according to $\big(F_v(X_{v0}), \dots, F_v(X_{v|X_i|})\big)$.
\end{enumerate}
\end{definition}

\section{Factor graph isomorphism for FE-NBP}

We denote the permutation group of of variable orderings within the $i$-th factor as $PG_i$, the permutation group of variable orderings of all factors $PG=PG_1\times PG_2\times \cdots PG_M=S_{|I_1|}\times S_{|I_2|}\times \cdots S_{|I_M|}$ 
where $S_{|I_i|}$ includes all the possible permutations of $|I_i|$ elements.
To satisfy Local Variable Symmetry of factor graph isomorphism in Definition~\ref{def:factor_graph_isomorphism}, the output of an inference algorithm $\mathcal{F}$ that operates on a factor graph $G$ should be equivariant (or invariant) to the permutation group $PG$ of variable orderings within factors. That means for any permutation $\sigma\in PG$, $\mathcal{F}(\sigma(G))=\sigma(\mathcal{F}(G))$. In more details, the permutation $\sigma$ is composed of the permutations of all variable orderings within factors. In more details, a permutation $\sigma$ is composed of permutations of variables within all factors, i.e., $\sigma=\sigma_1\times \sigma_2\times\cdots\sigma_M$ and $\sigma_a\in S_{|I_a|}$ ($|I_a$ is the number of variables associated with factor $a$).

Writing down all steps in FE-NBP, we have:
\begin{equation}
\begin{gathered}
m_{i \rightarrow a}^{(k)}(x_l) = -z_{i \rightarrow a} + \sum_{c \in \mathcal{N}(i) \setminus a} m_{c \rightarrow i}^{(k-1)}(x_l), \\
m_{a \rightarrow i}^{(k)}(x_l) = \tilde{m}_{a \rightarrow i}^{(k)}(x_l) +  \alpha^{(k)}_{a \rightarrow i}(x_l) \big(m^{(k-1)}_{a \rightarrow i}(x_l) - \tilde{m}^{(k)}_{a \rightarrow i}(x_l)\big), \\
 \tilde{m}_{a \rightarrow i}^{(k)}(x_l) = -z_{a \rightarrow i} +  \LSE{\mathbf{x}_a \setminus X_i=x_l} \bigg(\Psi_a(\mathbf{x}_a) + \sum_{j \in \mathcal{N}(a) \setminus i} m_{j \rightarrow a}^{(k)}(x_j) \bigg), \\
\alpha^{(k)}_{a \rightarrow i}(x_l) = \phi_{\text{NN}}\left({m}^{(k-1)}_{a \rightarrow i}(x_l), \tilde{m}^{(k)}_{a \rightarrow i}(x_l), b_i^{(k)}(x_l), \sum_{\mathbf{x}_a\backslash X_i=x_l}b_a^{(k)}(\mathbf{x}_a), \max_{\mathbf{x}_a\backslash X_i=x_l}b_a^{(k)}(\mathbf{x}_a)\right).\\
\end{gathered}
\end{equation}
Observe that no terms in the FE-NBP's formulations is affected by the permutation of variables within factors. Every term in the above formulations is a scalar. $\phi_{\text{NN}}$ only takes features that are relevant to variable $i$ and its variable assignment $x_l$, thus it is invariant to variable orderings. 
That is, $\mathcal{F}(\sigma(G))=\sigma(\mathcal{F}(G))$ when $\mathcal{F}$ corresponds to FE-NBP since $\sigma$ does not affect any step in the algorithm. FE-NBP respects Local Variable Symmetry of factor graph isomorphism in the same way that BP respects Local Variable Symmetry. For proof of BP respecting Local Variable Symmetry of factor graph isomorphism, refer to the Appendix section in \cite{kuck2020belief}.

To satisfy Variable Assignment Symmetry of factor graph isomorphism in Definition~\ref{def:factor_graph_isomorphism}, the output of an inference algorithm $\mathcal{F}$ that operates on a factor graph $G$ should be equivariant or invariant to the permutation group of all variable assignment orderings. Let $PG$ be the permutation group of all variable assignment orderings. A permutation $\sigma \in PG$ is composed of the permutations of all variables' assignments, i.e., $\sigma=\sigma_1\times \sigma_2\times\cdots\sigma_N$ and $\sigma_i\in S_{|X_i|}$ ($|X_i$ is the number of possible assignments of variable $i$). Let $G'=\sigma(G)$ denote the factor graph after the permutation, we proceed to show that for any permutation $\sigma\in PG$, $\mathcal{F}(\sigma(G))==\sigma(\mathcal{F}(G))$ when $\mathcal{F}$ corresponds to FE-NBP. $\mathcal{F}(G)$ is the output of $F$ when given an input factor graph $G$.
\begin{proof}

Let $m^{(k)}_{i \rightarrow a}$ and $m'^{(k)}_{i \rightarrow a}$ denote variable to factor messages and $m^{(k)}_{a \rightarrow i}$ and $m'^{(k)}_{a \rightarrow i}$ factor to variable messages obtained by applying k iterations of FE-NBP to factor graphs $G$ and $G'$. Our ultimate goal is to show $\mathcal{F}(\sigma(G))==\sigma(\mathcal{F}(G))$. Alternatively, we can try to prove that at any iteration k, $m^{(k)}_{i \rightarrow a}(\sigma(x_l)) = m'^{(k)}_{i \rightarrow a}(x_l)$ and $m^{(k)}_{a \rightarrow i}(\sigma(x_l)) = m'^{(k)}_{a \rightarrow i}$ since message vectors are the output of FE-NBP.
~Note that we slightly abuse the notation by using $\sigma$ to refer to a permutation and the bijective mapping determined by the permutation.

Base case: the initial messages are all equal when constant initialization is used and therefore satisfy any bijective mapping.

Assume $m^{(k-1)}_{i \rightarrow a}(\sigma(x_l)) = m'^{(k-1)}_{i \rightarrow a}(x_l)$ and $m^{(k-1)}_{a \rightarrow i}(\sigma(x_l)) = m'^{(k-1)}_{a \rightarrow i}$ hold for all variable assignments $x_l$ at iteration $k-1$.

Inductive step: by our assumption and Equation~\ref{eq:varToFacMsgs_log} (the definition of variable to factor messages), we have 
\begin{align} \label{eq:ind-1}
\begin{split}
     m_{i \rightarrow a}^{(k)}(\sigma(x_l)) &=  -z_{i \rightarrow a} + \sum_{c \in \mathcal{N}(i) \setminus a} m_{c \rightarrow i}^{(k-1)}(\sigma(x_l)) \\
     &= -z_{i \rightarrow a} + \sum_{c \in \mathcal{N}(i) \setminus a} m'^{(k-1)}_{c \rightarrow i}(x_l) = m'^{(k)}_{i \rightarrow a}(x_l). \\
\end{split}
\end{align}
By our assumption and Equation~\ref{eq:facToVarMsgs_log} (the definition of factor to variable messages), we have
\begin{align} \label{eq:ind-2}
\begin{split}
 \tilde{m}_{a \rightarrow i}^{(k)}(\sigma(x_l)) &= -z_{a \rightarrow i} +  \LSE{\mathbf{x}_a \setminus X_i=\sigma(x_l)} \bigg(\Psi_a(\mathbf{x}_a) + \sum_{j \in \mathcal{N}(a) \setminus i} m_{j \rightarrow a}^{(k)}(\sigma(x_j)) \bigg) \\
 &= -z_{a \rightarrow i} +  \LSE{\mathbf{x}_a \setminus X_i=x_l} \bigg(\Psi'_a(\mathbf{x}_a) + \sum_{j \in \mathcal{N}(a) \setminus i} m'^{(k)}_{j \rightarrow a}(x_j) \bigg) =  \tilde{m}'^{(k)}_{a \rightarrow i}(x_l)\\
\end{split} 
\end{align}
Let $x_m$ be the shorthand of $\sigma(x_l)$. By Equation \ref{eq:calculate_beliefs}, \ref{eq:fenbp-2}, \ref{eq:ind-1}, and \ref{eq:ind-2}, we have
\begin{align} \label{eq:ind-3}
\begin{split}
\alpha^{(k)}_{a \rightarrow i}(x_m) &= \phi_{\text{NN}}\left({m}^{(k-1)}_{a \rightarrow i}(x_m), \tilde{m}^{(k)}_{a \rightarrow i}(x_m), b_i^{(k)}(x_m), \sum_{\mathbf{x}_a\backslash X_i=x_m}b_a^{(k)}(\mathbf{x}_a), \max_{\mathbf{x}_a\backslash X_i=x_m}b_a^{(k)}(\mathbf{x}_a)\right) \\
&= \phi_{\text{NN}}\left({m}'^{(k-1)}_{a \rightarrow i}(x_l), \tilde{m}'^{(k)}_{a \rightarrow i}(x_l), b'^{(k)}_i(x_l), \sum_{\mathbf{x}_a\backslash X_i=x_l}b'^{(k)}_a(\mathbf{x}_a), \max_{\mathbf{x}_a\backslash X_i=x_l}b'^{(k)}_a(\mathbf{x}_a)\right) \\
&= \alpha'^{(k)}_{a \rightarrow i}(x_l)
\\
\end{split}
\end{align}
By Equation~\ref{eq:fenbp-2}, \ref{eq:ind-1}, \ref{eq:ind-2}, and \ref{eq:ind-3}, we have
\begin{align}
\begin{split}
m_{a \rightarrow i}^{(k)}(\sigma(x_l)) &= \tilde{m}_{a \rightarrow i}^{(k)}(\sigma(x_l)) +  \alpha^{(k)}_{a \rightarrow i}(\sigma(x_l)) \big(m^{(k-1)}_{a \rightarrow i}(\sigma(x_l)) - \tilde{m}^{(k)}_{a \rightarrow i}(\sigma(x_l))\big) \\
&= \tilde{m}'^{(k)}_{a \rightarrow i}(x_l) +  \alpha'^{(k)}_{a \rightarrow i}(x_l) \big(m'^{(k-1)}_{a \rightarrow i}(x_l) - \tilde{m}'^{(k)}_{a \rightarrow i}(x_l)\big) = m'^{(k)}_{a \rightarrow i}(x_l).
\end{split}
\end{align}
showing that the bijective mapping continues to hold at iteration $k$.
Therefore, we prove that for any $k\ge 1$, the outputs of FE-NBP is equivalent to the permutation of the orderings of variable assignments... i.e., $\mathcal{F}(\delta(G))\equiv \delta(\mathcal{F}(G))$...

\end{proof}

\section{Upper bound and lower bound of FE-NBP on MAP inference}

\begin{prop}
FE-NBP can provide an upper bound and a lower bound for the probability / log-score of the MAP assignment.
\end{prop}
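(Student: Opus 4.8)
The plan is to exhibit two efficiently computable quantities that sandwich the optimal log-score $\mathrm{OPT} = \max_{\mathbf{x}} \sum_{a=1}^M \Psi_a(\mathbf{x}_a)$: a lower bound obtained by evaluating the log-score of the assignment that FE-NBP decodes, and an upper bound obtained from a reparameterization of the log-score induced by the final factor-to-variable messages. The crucial observation that makes the statement hold \emph{for any parameterization of} $\phi_{\mathrm{NN}}$ is that the upper bound rests on a purely algebraic identity that is insensitive to how the messages were produced, so neither the learned damping ratios nor the number of iterations can invalidate it.

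\textbf{Lower bound.} First I would decode $\hat{x}^*_i = \arg\max_{x_i} b_i^{(k)}(x_i)$ from the variable beliefs and simply evaluate the log-score $\sum_{a} \Psi_a(\hat{\mathbf{x}}^*_a) = \sum_a \log f_a(\hat{\mathbf{x}}^*_a)$ of this concrete assignment. Since $\hat{x}^*$ is a feasible joint configuration, its log-score is at most the maximum over all configurations, giving $\sum_a \log f_a(\hat{\mathbf{x}}^*_a) \le \mathrm{OPT}$; this is the lower bound, and it is exactly computable.

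\textbf{Upper bound.} Next I would establish the reparameterization identity. Using the final messages $m_{a \rightarrow i}^{(k)}$, define singleton terms $\theta_i(x_i) = \sum_{a \in \mathcal{N}(i)} m_{a\rightarrow i}^{(k)}(x_i)$ and factor terms $\theta_a(\mathbf{x}_a) = \Psi_a(\mathbf{x}_a) - \sum_{i \in \mathcal{N}(a)} m_{a\rightarrow i}^{(k)}(x_i)$. A one-line double-counting argument shows $\sum_i \theta_i(x_i) + \sum_a \theta_a(\mathbf{x}_a) = \sum_a \Psi_a(\mathbf{x}_a)$ for \emph{every} $\mathbf{x}$, because the two message double-sums range over the same set of incident factor–variable pairs and cancel, so each $m_{a\rightarrow i}^{(k)}(x_i)$ is added once in $\theta_i$ and subtracted once in $\theta_a$. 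I would then apply the elementary inequality $\max_{\mathbf{x}}\bigl[\sum_i \theta_i(x_i) + \sum_a \theta_a(\mathbf{x}_a)\bigr] \le \sum_i \max_{x_i}\theta_i(x_i) + \sum_a \max_{\mathbf{x}_a}\theta_a(\mathbf{x}_a)$, since decoupling the maximizations can only increase the value. The right-hand side is computable from quantities FE-NBP already maintains (indeed $\theta_i$ equals $\log b_i^{(k)}$ up to an additive constant) and it upper-bounds $\mathrm{OPT}$, yielding the sandwich $\sum_a \log f_a(\hat{\mathbf{x}}^*_a) \le \mathrm{OPT} \le \sum_i \max_{x_i}\theta_i(x_i) + \sum_a \max_{\mathbf{x}_a}\theta_a(\mathbf{x}_a)$.

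The main obstacle is bookkeeping rather than conceptual: I must reconcile the normalization terms $z_{a \rightarrow i}$ and the damped update of Equation~\ref{eq:fenbp-1} with the reparameterization so that the cancellation is literal, and I must ensure the $\arg\max$ decoding and the max-marginal terms all refer to the \emph{same} set of messages at iteration $k$. Because the identity holds verbatim for any real-valued messages, the damping and the network $\phi_{\mathrm{NN}}$ enter only through the numerical values of the bounds, not through their validity, which is exactly what lets the proposition hold for any parameterization. If one instead wants bounds on the MAP \emph{probability} rather than its log-score, the same two quantities bound $\log p(\mathbf{x}^*) = \mathrm{OPT} - \log Z$ up to the additive constant $\log Z$, so I would state the guarantee for the unnormalized log-score, since the partition function is not available at MAP-decoding time.
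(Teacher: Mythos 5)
Your proposal is correct and follows essentially the same route as the paper's own proof: the lower bound is the log-score of the decoded assignment $\hat{x}^*$ (trivially at most the optimum), and the upper bound is the same message-based reparameterization $\sum_a \Psi_a(\mathbf{x}_a) = \sum_a \bigl(\Psi_a(\mathbf{x}_a) - \sum_{j\in\mathcal{N}(a)} m_{a\rightarrow j}^{(k)}(x_j)\bigr) + \sum_i \sum_{b\in\mathcal{N}(i)} m_{b\rightarrow i}^{(k)}(x_i)$ followed by decoupling the maximizations term by term. Your explicit remark that the cancellation identity holds for arbitrary real-valued messages (so damping, normalization, and $\phi_{\mathrm{NN}}$ cannot invalidate the bounds) is exactly the observation the paper makes when it notes the argument applies to any message-based inference algorithm.
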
 

\begin{proof}
Upper bound:
\begin{eqnarray*}
p(x^*) & = & \exp\left(-\log Z+\sum_{a=1}^M \log f_a(\mathbf{x}^*_a)\right) 
\text{~~~~~~~\small(by definition)}\\
& = & \exp\left(-\log Z+\sum_{a=1}^M \Psi_a(\mathbf{x}^*_a)\right) 
\text{~~~~~~~\small(change the notation)}\\
& = & \exp\left(-\log Z+\sum_{a=1}^M \left(\Psi_a(\mathbf{x}^*_a)-\sum_{j\in\mathcal{N}(a)}m_{a\rightarrow j}^{(k)}(x^*_j)\right)+\sum_{i=1}^N\sum_{b\in\mathcal{N}(i)}m_{b\rightarrow j}^{(k)}(x_i^*)\right)\\
&&\text{~~~~~~~~~~~~~~~~~~~~~~~~~~~~~~~~~~~~~~~~~~~~~~~~~~~~~~~~~~~~~~~~~~~~~~~~~~~~~~~~~~~~~~~~~~~~~~~~~~~~~~~~~~~~~~~~~~~~~~~~\small(reformulation)}\\
& \le & \exp\left(-\log Z+\sum_{a=1}^M \max_{\mathbf{x}_a}\left(\Psi_a(\mathbf{x}_a)-\sum_{j\in\mathcal{N}(a)}m_{a\rightarrow j}^{(k)}(x_j)\right)+\sum_{i=1}^N\max_{x_i}\sum_{b\in\mathcal{N}(i)}m_{b\rightarrow j}^{(k)}(x_i)\right) 
\\
&&\text{~~~~~~~~~~~~~~~~~~~~~~~~~~~~~~~~~~~~~~~~~~~~~~~~~~~~~~~~~~~~~~~~~~~~~~~~~~~~~~~~~~~~~~~~~~~~~~~~~~~~~~~~~~~~~~~~~~~~~~~~\small(by argmax)}\\
\end{eqnarray*}

lower bound:
\begin{eqnarray*}
p(x^*) \ge p(\hat x^*)
\text{~~~~~~~\small(by definition)}
\end{eqnarray*}
where $\hat x^*=\arg\max b_i(x_i)$.
\end{proof}
It is easy to find that the previous proof can be applied to any message-based inference algorithms. Traditionally, people aimed at making the bounds tighter or making the convergence faster.

\section{Details of the experiment on Ising models}
In this experiment, $\phi_{\text{NN}_1}$ and $\phi_{\text{NN}_2}$ are parameterized by a GRUs with a hidden dimension of 5. All MLPs in Equation~\ref{eq:fe-gnn1} and \ref{eq:fe-gnn2} have two hidden layers with 64 units each, and use ReLU nonlinearities. In BP of BP with damping, message propagates for at most 200 steps. In all neural network-based inference models, messages propagate for $T=10$ time steps. All inference procedures with a neural network are optimized on binary cross-entropy loss of estimated marginals and ground truth, trained with ADAM~\cite{kingma2014adam} with a learning rate of 0.001. We use early stopping with a window size of 5. Results have been averaged over two runs.

\section{Details of the experiment on UAI-challenge datasets}

\paragraph{Model Details} In our experiments, $\phi_{\text{NN}}$ is parameterized by a three-layer MLP with graph-wise normalization~\cite{chen2020learning} before each activation function, the leaky ReLU. We train FE-NBP with the Adam optimizer, learning rate as 0.0001, and the number of hidden neurons as 64 for 1000 epochs. We tune two hyper-parameters of FE-NBP, which are the utilization of the graph-wise normalization layers and the initialized damping ratios, 
and report the best performance. We train BPNN with more computation powers and tune its hyper-parameters including the learning rate, the number of neurons, and the initialized damping ratio.

We train our models to minimize the expectation of the UAI loss as defined in the main body. In detail, we first calculate the variable beliefs using the messages updated by our models, as follows:
\begin{eqnarray*}
b^{(K)}_i(x_i) = \frac{1}{Z_i^{(K)}}\exp\left(\sum_{a\in\mathcal{N}(i)}m_{a\rightarrow i}^{(K)}(x_i)\right),
\end{eqnarray*}
where $Z_i^{(K)}$ is the normalization term defined as 
\begin{eqnarray*}
Z^{(K)}_i = \sum_{x_i}\exp\left(\sum_{a\in\mathcal{N}(i)}m_{a\rightarrow i}^{(K)}(x_i)\right).
\end{eqnarray*}
Assuming that we estimate the MAP assignment by sampling from the categorical distribution determined by the variable beliefs, i.e., $p(\hat x^*_i=l)=b_i^{(K)}(l)$, the expectation of the log-score of the estimated MAP assignment is then 
\begin{eqnarray*}
E_{\hat x_i^*\sim b_i^{(K)}}\left[\log\text{score}(\hat x^*)\right] & = & E_{\hat x_i^*\sim b_i^{(K)}}\left[\sum_{a=1}^M\log f_a(\hat{\mathbf{x}}^*_a)\right] \\
& = & \sum_{a=1}^M\sum_{\mathbf{x}_a}\prod_{j\in\mathcal{N}(a)}b_j^{(K)}(x_j)\log f_a({\mathbf{x}}_a)
\end{eqnarray*}
and the training loss is then 
\begin{eqnarray*} 
L & = & \frac{1}{|\mathcal{D}|}\sum_{G\in\mathcal{D}}\left|\frac{\log\text{score}(x^*)-E_{\hat x_i^*\sim b_i^{(K)}}\left[\log\text{score}(\hat x^*)\right]}{\log\text{score}(x^*)}\right|\\
& = & \frac{1}{|\mathcal{D}|}\sum_{G\in\mathcal{D}}\left|\frac{\log\text{score}(x^*)-\sum_{a=1}^M\sum_{\mathbf{x}_a}\prod_{j\in\mathcal{N}(a)}b_j^{(K)}(x_j)\log f_a({\mathbf{x}}_a)}{\log\text{score}(x^*)}\right|.
\end{eqnarray*}

\paragraph{Graph Normalization}
The graph-wise normalization operates the same as other normalization layers except for the group of hidden features used for calculating the mean and variance: 
\begin{eqnarray*}
\hat{h}_{a\rightarrow i}^{(k)(l)}[j] = \frac{1}{{\sigma}^{(k)}[j]}({h}_{a\rightarrow i}^{(k)(l)}[j]-{\mu}^{(k)}[j]),
\end{eqnarray*}
where ${h}_{a\rightarrow i}^{(k)(l)}[j]$ is the $j$th element of the hidden features while calculating the damping ratio ${\alpha}_{a\rightarrow i}^{(k)(l)}$, 
\begin{equation*}
    \mu^{(k)}[j] = \frac{1}{Z}\sum_{a,i,l}{h}_{a\rightarrow i}^{(k)(l)}[j], ~~\sigma^{(k)}[j] = \sqrt{\frac{1}{Z}\sum_{a,i,l}({h}_{a\rightarrow i}^{(k)(l)}[j]-\mu^{(k)}[j])^2+\epsilon}.
\end{equation*}

\paragraph{Baselines}
We implement the classical beam search algorithm as follows: During search, the algorithm maintains a cache which contains K states whose log-scores are the largest among all visited states. The cache is first initialized with a random state. At each step, the algorithm examines all neighbors of the states in the cache and updates the cache accordingly. We define a state $x$ is a neighbor of a state $x'$ if and only if their variable assignments are only different on one variable, i.e., $\exists i~x_i\neq x'_i\wedge~\forall j\neq i~ x_j\equiv x'_j$. The algorithm stops when there is no update of the cache, when the maximum search step is reached, or when the maximum search time is reached. We set the maximum search step as 100000, the maximum search time as one hour per instance, and the size of the cache $K$ as 10. The best-first search algorithm is implemented by setting the size of the cache $K$ as 1. 



\end{document}